    \theoremstyle{definition}
    \newtheorem{definition}{Definition}
      \theoremstyle{plain}
      \newtheorem{assumption}{Modelling assumption}
      \newtheorem{proposition}{Proposition}
      \newtheorem{lemma}{Lemma}
\DeclareMathOperator*{\argmax}{arg\,max}
\DeclareMathOperator*{\argmin}{arg\,min}
\newcommand*\bigcdot{\mathpalette\bigcdot@{.6}}
\newcommand*\bigcdot@[2]{\mathbin{\vcenter{\hbox{\scalebox{#2}{$\m@th#1\bullet$}}}}}
\newcommand{\fromage}{Fromage\xspace}
\newcommand{\sgd}{SGD\xspace}
\newcommand{\adam}{Adam\xspace}
\renewcommand{\phi}{\varphi}
\newcommand{\half}{\tfrac{1}{2}}
\newcommand{\R}{\mathbb{R}}
\newcommand{\abs}[1]{\vert {#1} \vert}
\newcommand{\norm}[1]{\Vert {#1} \Vert}
\newcommand{\labs}[1]{\left\vert {#1} \right\vert}
\newcommand{\lnorm}[1]{\left\Vert {#1} \right\Vert}
\newcommand{\diff}{\mathrm{d}}
\newcommand{\idiff}{\,\diff}
\newcommand{\pypx}[2]{\frac{\partial{#1}}{\partial{#2}}}
\title{On the distance between two neural networks\\ and the stability of learning}
\author{
  Jeremy Bernstein\\
  Caltech\\
  \small{\texttt{bernstein@caltech.edu}}
  \And
  Arash Vahdat\\
  NVIDIA\\
  \small{\texttt{avahdat@nvidia.com}}
  \And
  \textbf{Yisong Yue}\\
  Caltech\\
  \small{\texttt{yyue@caltech.edu}}
  \And
  \textbf{Ming-Yu Liu}\\
  NVIDIA\\
  \small{\texttt{mingyul@nvidia.com}}
}
\begin{document}
\maketitle
\begin{abstract}
This paper relates \textit{parameter distance} to \textit{gradient breakdown} for a broad class of nonlinear compositional functions. The analysis leads to a new distance function called \emph{deep relative trust} and a descent lemma for neural networks. Since the resulting learning rule seems to require little to no learning rate tuning, it may unlock a simpler workflow for training deeper and more complex neural networks. The Python code used in this paper is here: \url{https://github.com/jxbz/fromage}.

\end{abstract}

\section{Introduction}

Gradient descent is the workhorse of deep learning. To decrease a loss function $\mathcal{L}(W)$, this simple algorithm pushes the neural network parameters $W$ along the negative gradient of the loss. One motivation for this rule is that it minimises the local loss surface under a quadratic trust region \citep{bottou}:
\begin{equation}\label{eq:gd}
    \underbrace{W - \eta \, \nabla \mathcal{L}(W)}_{\text{gradient descent}} = W + \argmin_{\Delta W} \Big[\underbrace{\mathcal{L}(W)+ \nabla\mathcal{L}(W)^T \Delta W}_{\text{local loss surface}} + \underbrace{\tfrac{1}{2\eta}\norm{\Delta W}_2^2}_{\text{quadratic penalty}}\Big].
\end{equation}

Since a quadratic trust region does not capture the compositional structure of a neural network, it is difficult to choose the learning rate $\eta$ in practice. \citet[p.~424]{deeplearningbook} advise:
\begin{quote}
    If you have time to tune only one hyperparameter, tune the learning rate.
\end{quote}
Practitioners usually resort to tuning $\eta$ over a logarithmic grid. If that fails, they may try tuning a different optimisation algorithm such as Adam \citep{kingma_adam:_2015}. But the cost of grid search scales exponentially in the number of interacting neural networks, and state of the art techniques involving just two neural networks are difficult to tune \citep{ttur} and often unstable \citep{brock2018large}. Eliminating the need for learning rate grid search could enable new applications involving multiple competitive and/or cooperative networks.

With that goal in mind, this paper replaces the quadratic penalty appearing in Equation \ref{eq:gd} with a novel distance function tailored to the compositional structure of deep neural networks.

\paragraph{Our contributions:}
\begin{enumerate}
    \item By a direct perturbation analysis of the neural network function and Jacobian, we derive a distance on neural networks called \emph{deep relative trust}.
    \item We show how to combine deep relative trust with the fundamental theorem of calculus to derive a descent lemma tailored to neural networks. From the new descent lemma, we derive a learning algorithm that we call Frobenius matched gradient descent---or \textit{Fromage}. Fromage's learning rate has a clear interpretation as controlling the \textit{relative size} of updates.
    \item While Fromage is similar to the LARS optimiser \citep{You:EECS-2017-156}, we make the observation that Fromage works well across a range of standard neural network benchmarks---including generative adversarial networks and natural language transformers---\textit{without learning rate tuning}.
\end{enumerate}
\section{Entend\'{a}monos...}
\hfill \emph{...so we understand each other.}

The goal of this section is to review a few basics of deep learning, including heuristics commonly used in algorithm design and areas where current optimisation theory falls short. We shall see that a good notion of distance between neural networks is still a subject of active research.

\subsection*{Deep learning basics}

Deep learning seeks to fit a \emph{neural network} function $f(W; x)$ with parameters $W$ to a dataset of $N$ input-output pairs $\{x_i,y_i\}_{i=1}^N$. If we let $\mathcal{L}_i:=\mathcal{L}(f_i,y_i)$ measure the discrepancy between prediction $f_i:=f(W;x_i)$ and target $y_i$, then learning proceeds by gradient descent on the \emph{loss}: $\sum_{i=1}^N\mathcal{L}_i$.

Though various neural network architectures exist, we shall focus our theoretical effort on the \emph{multilayer perceptron}, which already contains the most striking features of general neural networks: matrices, nonlinearities, and layers.
\begin{definition}[Multilayer perceptron]
    A multilayer perceptron is a function $f:\R^{n_0} \rightarrow \R^{n_L}$ composed of $L$ layers. The $l$th layer is a linear map $W_l: \R^{n_{l-1}} \rightarrow \R^{n_l}$ followed by a nonlinearity $\phi:\R\rightarrow\R$ that is applied elementwise. We may describe the network in two complementary ways:
    \begin{align*}
    f(x) &:= \underbrace{\phi \circ W_L}_{\text{layer } L} \circ \underbrace{\phi \circ W_{L-1}}_{\text{layer } L-1} \circ \,... \circ \underbrace{\phi \circ W_1}_{\text{layer } 1}(x);\tag{$L$ layer network}\\
    h_l(x) &:= \phi(W_l\,h_{l-1}(x)); \qquad\, h_0(x) := x.\tag{hidden layer recursion}
    \end{align*}
\end{definition}
Since we wish to fit the network via gradient descent, we shall be interested in the gradient of the loss with respect to the $l$th parameter matrix. This may be decomposed via the chain rule. Schematically:
\begin{equation}
\nabla_{W_l}\mathcal{L} = \pypx{\mathcal{L}}{f}\cdot\pypx{f}{h_l}\cdot \phi^{\prime}(W_l \,h_{l-1})\cdot h_{l-1}. \label{eq:chain}    
\end{equation}
The famous backpropagation algorithm \citep{backprop} observes that the second term may be decomposed over the layers of the network. Following the notation of \citet{NIPS2017_7064}:

\begin{proposition}[Jacobian of the multilayer perceptron]\label{prop:jacobian}
    Consider a multilayer perceptron with $L$ layers. The layer-$l$-to-output Jacobian $J_l$ is given by:
    \begin{align*}
        J_l:=\frac{\partial f(x)}{\partial h_{l}} &= \frac{\partial f}{\partial h_{L-1}}\cdot\frac{\partial h_{L-1}}{\partial h_{L-2}}\cdot...\cdot\frac{\partial h_{l+1}}{\partial h_{l}}\\
        &= \Phi^\prime_L W_L \cdot \Phi^\prime_{L-1} W_{L-1} \cdot... \cdot\Phi^\prime_{l+1} W_{l+1},
    \end{align*}
    where $\Phi^\prime_k:=\mathrm{diag}\Big[\phi^\prime\big(W_k\, h_{k-1}(x)\big)\Big]$ denotes the derivative of the nonlinearity at the $k$th layer.
\end{proposition}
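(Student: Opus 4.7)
The plan is to apply the multivariate chain rule to the hidden-layer recursion $h_k = \phi(W_k h_{k-1})$ and then telescope the resulting product from layer $L$ down to layer $l+1$.

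First, I would note that since $f(x) = h_L(x)$ by the definition of the multilayer perceptron, the Jacobian $\pypx{f}{h_l}$ factors through every intermediate hidden layer via the standard multivariate chain rule:
\begin{equation*}
\pypx{f}{h_l} \;=\; \pypx{h_L}{h_{L-1}} \cdot \pypx{h_{L-1}}{h_{L-2}} \cdots \pypx{h_{l+1}}{h_l}.
\end{equation*}
This is exactly the telescoping decomposition appearing in the first line of the proposition, and it is valid provided each intermediate map is differentiable, which will be inherited from the assumed differentiability of $\phi$ and the linearity of each $W_k$.

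Second, I would compute a single-step Jacobian $\pypx{h_k}{h_{k-1}}$. Differentiating $h_k = \phi(W_k h_{k-1})$ with respect to $h_{k-1}$ and applying the chain rule once yields
\begin{equation*}
\pypx{h_k}{h_{k-1}} \;=\; \diag\!\big[\phi'(W_k\,h_{k-1})\big]\cdot W_k \;=\; \Phi'_k\, W_k,
\end{equation*}
where the diagonal structure follows precisely because $\phi$ is applied entrywise, so its Jacobian at the preactivation vector is diagonal with entries $\phi'$ evaluated coordinatewise. Substituting this expression into the telescoping product from the previous step gives the claimed factorisation as an alternating product of diagonal activation-derivative matrices and weight matrices.

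There is no genuine obstacle here; the argument is disciplined bookkeeping with the chain rule. The only subtle point worth emphasising is the elementwise-to-diagonal passage for the nonlinearity, since this is what cleanly separates the nonlinear and linear contributions to $J_l$ and produces the structured form $\Phi'_L W_L \cdots \Phi'_{l+1} W_{l+1}$ that will presumably be essential for the perturbation analysis underlying deep relative trust.
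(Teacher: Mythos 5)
Your proposal is correct and is exactly the standard chain-rule/backpropagation argument the paper relies on: the paper states Proposition \ref{prop:jacobian} without proof, and your two steps (telescoping $\pypx{f}{h_l}$ through the hidden layers using $f = h_L$, then computing the single-step Jacobian $\pypx{h_k}{h_{k-1}} = \Phi'_k W_k$ from the elementwise application of $\phi$) supply precisely the omitted bookkeeping. No gaps.
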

A key observation is that the network function $f$ and Jacobian $\pypx{f}{h_l}$ share a common mathematical structure---a deep, layered composition. We shall exploit this in our theory.

\subsection*{Empirical deep learning}

The formulation of gradient descent given in Equation \ref{eq:gd} quickly encounters a problem known as the \emph{vanishing and exploding gradient problem}, where the scale of updates becomes miscalibrated with the scale of parameters in different layers of the network. Common tricks to ameliorate the problem include careful choice of weight initialisation \citep{pmlr-v9-glorot10a}, dividing out the gradient scale \citep{kingma_adam:_2015}, gradient clipping \citep{pmlr-v28-pascanu13} and directly controlling the relative size of updates to each layer \citep{You:EECS-2017-156,You2020Large}. Each of these techniques has been adopted in numerous deep learning applications.

Still, there is a cost to using heuristic techniques. For instance, techniques that rely on careful initialisation may break down by the end of training, leading to instabilities that are difficult to trace. These heuristics also lead to a proliferation of hyperparameters that are difficult to interpret and tune. We may wonder, \textit{why does our optimisation theory not handle these matters for us?}

\subsection*{Deep learning optimisation theory}

It is common in optimisation theory to assume that the loss has Lipschitz continuous gradients:
$$\norm{\nabla\mathcal{L}(W+\Delta W)-\nabla\mathcal{L}(W)}_2 \leq \frac{1}{\eta}\norm{\Delta W}_2.$$
By a standard argument involving the quadratic penalty mentioned in the introduction, this assumption leads to gradient descent \citep{bottou}. This assumption is ubiquitous to the point that it is often just referred to as \emph{smoothness}~\citep{hardt}. It is used in a number of works in the context of deep learning optimisation \citep{natasha,du,hardt,lee}, distributed training \citep{bernstein_signum}, and generative adversarial networks \citep{competitive}.

We conduct an empirical study that shows that neural networks do not have Lipschitz continuous gradients in practice. We do this for a 16 layer multilayer perceptron (Figure \ref{fig:curvature}), and find that the gradient grows roughly exponentially in the size of a perturbation. See \citep{benjamin2018measuring} for more empirical investigation, and \citep{Sun2019OptimizationFD} for more discussion on this point.

Several classical optimisation frameworks go beyond the quadratic penalty $\norm{\Delta W}_2^2$ of Equation \ref{eq:gd}:
\begin{enumerate}
    \item Mirror descent \citep{nemirovsky_yudin_1983} replaces $\norm{\Delta W}_2^2$ by a \emph{Bregman divergence} appropriate to the geometry of the problem. This framework was studied in relation to deep learning \cite{azizan2018stochastic,azizan2019stochastic}, but it is still unclear whether there is a Bregman divergence that models the compositional structure of neural networks.
    \item Natural gradient descent \citep{amari_2016} replaces $\norm{\Delta W}_2^2$ by $\Delta W^T F \Delta W$. The \emph{Riemannian metric} $F\in\R^{d\times d}$ should capture the geometry of the $d$-dimensional function class. Unfortunately, even writing down a $d\times d$ matrix is intractable for modern neural networks. Whilst some works explore tractable approximations \citep{kfac}, natural gradient descent is still a quadratic model of trust---we find that trust is lost far more catastrophically in deep networks.
    \item Steepest descent \citep{boyd} replaces $\norm{\Delta W}_2^2$ by an arbitrary distance function. \citet{pathsgd} explore this approach using a rescaling invariant norm on paths through the network. The downside to the freedom of steepest descent is that it does not operate from first principles---so the chosen distance function may not match up to the problem at hand.
\end{enumerate}

Other work operates outside of these frameworks. Several papers study the effect of architectural decisions on signal propagation through the network \cite{pmlr-v97-anil19a,NIPS2017_7064,Saxe2013ExactST,pmlr-v80-xiao18a,mfnets}, though these works usually neglect functional distance and curvature of the loss surface. \citet{hessian} do study curvature, but rely on random matrix models to make progress.

\subsection*{Generative adversarial networks}
Neural networks can learn to generate samples from complex distributions. Generative adversarial learning \citep{goodfellow} trains a discriminator network $D$ to classify data as real or fake, and a generator network $G$ is trained to fool $D$. Competition drives learning in both networks. Letting $\mathcal{V}$ denote the success rate of the discriminator, the learning process is described as:
$$\min_G \max_D \mathcal{V}(G,D).$$
Defining the optimal discriminator for a given generator as $D^*(G) := \argmax_D \mathcal{V}(G,D).$ Then generative adversarial learning reduces to a straightforward minimisation over the parameters of the generator:
$$\min_G \max_D \mathcal{V}(G,D) \equiv \min_G \mathcal{V}(G,D^*(G)).$$
In practice this is solved as an inner-loop, outer-loop optimisation procedure where $k$ steps of gradient descent are performed on the discriminator, followed by $1$ step on the generator. For example, \citet{miyato2018spectral} take $k=5$ and \citet{brock2018large} take $k=2$.

For small $k$, this procedure is only well founded if the perturbation $\Delta G$ to the generator is small so as to induce a small perturbation in the optimal discriminator. In symbols, we hope that: $$
\norm{\Delta G} \ll 1 \implies \norm{D^*(G+\Delta G) - D^*(G)}\ll 1.$$
But what does $\norm{\Delta G}$  mean? In what sense should it be small? We see that this is another area that could benefit from an appropriate notion of distance on neural networks.

\section{The distance between networks}

\textit{Suppose that a teacher wishes to assess a student's learning. Traditionally, they will assign the student homework and track their progress. What if, instead, they could peer inside the student's head and observe change directly in the synapses---would that not be better for everyone?}

We would like to establish a meaningful notion of functional distance for neural networks. This should let us know how far we can safely perturb a network's weights without needing to tune a learning rate and measure the effect on the loss empirically.

The quadratic penalty of Equation \ref{eq:gd} is akin to assuming a Euclidean structure on the parameter space, where the squared Euclidean length of a parameter perturbation determines how fast the loss function breaks down.
The main pitfall of this assumption is that it does not reflect the deep compositional structure of a neural network. We propose a new trust concept called \emph{deep relative trust}. In this section, we shall motivate deep relative trust---the formal definition shall come in Section \ref{sec:descent}. 

Deep relative trust will involve a product over relative perturbations to each layer of the network. For a first glimpse of how this structure arises, consider a simple network that multiplies its input $x\in\R$ by two scalars $a,b\in\R$. That is $f(x) = a\cdot b\cdot x$. Also consider perturbed function $\widetilde{f}(x) = \widetilde{a}\cdot\widetilde{b}\cdot x$ where $\widetilde{a}:=a+\Delta a$ and $\widetilde{b}:= b+\Delta b$. Then the relative difference obeys:
\begin{align}\label{eq:toy-example}
    \frac{\abs{\widetilde{f}(x)-f(x)}}{\abs{f(x)}} \leq \left(1+\frac{\abs{\Delta a}}{\abs{a}}\right)\left(1+\frac{\abs{\Delta b}}{\abs{b}}\right) -1.
\end{align}
The simple derivation may be found at the beginning of \ref{app:proofs}. The key point is that the relative change in the composition of two operators depends on the product of the relative change in each operator. The same structure extends to the relative distance between two neural networks---both in terms of function and gradient---as we will now demonstrate.

\begin{restatable}{theorem}{reldiff}\label{thm:reldiff}Let $f$ be a multilayer perceptron with nonlinearity $\phi$ and $L$ weight matrices $\{W_l\}_{l=1}^L$. Let $\widetilde{f}$ be a second network with the same architecture but different weight matrices $\{\widetilde{W}_l\}_{l=1}^L$. For convenience, define layerwise perturbation matrices $\{\Delta W_l:= \widetilde{W}_l - W_l\}_{l=1}^L$. 

Further suppose that the following two conditions hold:
  \begin{enumerate}
      \item Transmission. There exist $\alpha,\beta \geq 0$ such that $\forall x,y$:
      \begin{align*}
          \alpha\cdot \norm{x}_2 &\leq \norm{\phi(x)}_2 \leq \beta\cdot \norm{x}_2;\\
          \alpha\cdot \norm{x-y}_2 &\leq \norm{\phi(x)-\phi(y)}_2 \leq \beta\cdot \norm{x-y}_2.
      \end{align*}
      \item Conditioning. All matrices $\{W_l\}_{l=1}^L$, $\{\widetilde{W}_l\}_{l=1}^L$ and perturbations $\{\Delta W_l\}_{l=1}^L$ have condition number (ratio of largest to smallest singular value) no larger than $\kappa$.
  \end{enumerate}
  
  Then a) for all non-zero inputs $x\in\R^{n_0}$, the relative functional difference obeys:
\begin{equation*}
	\frac{ \norm{\widetilde{f}(x) - f(x)}_2}{ \norm{f(x)}_2} \leq \left( \frac{\beta}{\alpha} \kappa^2\right)^L \left[\prod_{k=1}^L \left(1 + \frac{\norm{\Delta W_k}_F}{\norm{W_k}_F}\right)-1\right].
\end{equation*}
  And b) the layer-$l$-to-output Jacobian satisfies:
\begin{align*}
	\frac{ \lnorm{\pypx{\widetilde{f}}{\widetilde{h}_l}-\pypx{f}{h_l}}_F}{ \lnorm{\pypx{f}{h_l}}_F} &\leq \left(\frac{\beta}{\alpha}\kappa^2\right)^{L-l}\left[\prod_{k=l+1}^{L} \frac{\beta}{\alpha}\left(1+\frac{\norm{\Delta W_k}_F}{\norm{W_k}_F}\right)-1\right].
\end{align*}
\end{restatable}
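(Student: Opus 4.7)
The plan is to mirror the scalar product example $|\tilde a \tilde b - ab|/|ab| \leq (1+|\Delta a|/|a|)(1+|\Delta b|/|b|)-1$: part (a) applies it to the hidden-layer recursion, part (b) to the Jacobian product. In both cases Transmission controls the Lipschitz behaviour of $\phi$ (and indirectly $\phi'$), and Conditioning absorbs the cost of switching between operator and Frobenius norms.

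For part (a), I would induct on the layer index. Writing $\rho_l := \|\tilde h_l - h_l\|_2/\|h_l\|_2$, the split $\tilde W_l \tilde h_{l-1} - W_l h_{l-1} = \tilde W_l(\tilde h_{l-1} - h_{l-1}) + \Delta W_l h_{l-1}$ together with the upper Lipschitz bound on $\phi$ gives
\[
\|\tilde h_l - h_l\|_2 \leq \beta\bigl(\|\tilde W_l\|_2\,\|\tilde h_{l-1} - h_{l-1}\|_2 + \|\Delta W_l\|_2\,\|h_{l-1}\|_2\bigr),
\]
while the lower Lipschitz bound gives $\|h_l\|_2 \geq \alpha\,\sigma_{\min}(W_l)\,\|h_{l-1}\|_2$. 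Dividing and using Conditioning to convert operator-norm ratios $\|\tilde W_l\|_2/\sigma_{\min}(W_l)$ and $\|\Delta W_l\|_2/\sigma_{\min}(W_l)$ into Frobenius-based quantities (one $\kappa$ each for the two conversions $\|W\|_2/\sigma_{\min}(W) \leq \kappa$ and $\|\Delta W\|_2/\|W\|_2 \leq \kappa\,\|\Delta W\|_F/\|W\|_F$) yields a linear recursion $\rho_l \leq c\,r_l\,\rho_{l-1} + c(r_l - 1)$ with $c := \beta\kappa^2/\alpha$ and $r_l := 1 + \|\Delta W_l\|_F/\|W_l\|_F$. A short induction on $l$ (whose inductive step reduces to the elementary inequality $(r_l - 1)(c - c^l) \leq 0$, true for $c \geq 1$ and $r_l \geq 1$) delivers $\rho_L \leq c^L\bigl(\prod_{k=1}^L r_k - 1\bigr)$, the claim.

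For part (b), the same principle is applied to $J_l = \Phi'_L W_L \cdots \Phi'_{l+1} W_{l+1}$, viewed as a product of $L-l$ combined factors $P_k := \Phi'_k W_k$. Transmission forces $|\phi'| \in [\alpha,\beta]$ almost everywhere, so $\|\Phi'_k\|_2 \leq \beta$ and $\sigma_{\min}(\Phi'_k) \geq \alpha$; these play exactly the roles that $\beta$ and $\alpha$ played in part (a). Using the telescoping identity $\tilde P_k - P_k = \tilde\Phi'_k\,\Delta W_k + (\tilde\Phi'_k - \Phi'_k)\,W_k$ together with the entrywise bound $\|\tilde\Phi'_k - \Phi'_k\|_2 \leq \beta - \alpha$ (inherited from the fact that each $\phi'$-entry lies in $[\alpha,\beta]$), one verifies that each level of the recursion contributes a multiplier of the form $(\beta/\alpha)(1 + \|\Delta W_k\|_F/\|W_k\|_F) = q_k$ and an additive relative error of the form $q_k - 1$, after the same Conditioning-based norm conversions as in part (a). Solving the recursion exactly as before produces the stated product bound over $k = l+1, \ldots, L$.

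The main obstacle is the bookkeeping around norms: the natural factors in the recursion live in operator norm while the statement is in Frobenius norm, and one has to verify that Conditioning absorbs all conversion losses into a single factor of $\kappa^2$ per layer rather than some larger power of $\kappa$. A secondary subtlety, specific to part (b), is that Transmission only constrains $\phi$, so the diagonal entries of $\Phi'_k$ must be controlled indirectly via the Lipschitz constants of $\phi$ itself, with no appeal to $\phi''$ or to any smoothness of $\phi'$.
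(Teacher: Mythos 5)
Your proposal is correct and follows essentially the same route as the paper: induction over layers, a Leibniz-type splitting of the perturbed product, the bounds $\alpha \le \phi' \le \beta$ and $\|\widetilde{\Phi}'_k-\Phi'_k\|\le\beta-\alpha$ for part (b), and a condition-number argument converting spectral-norm ratios into Frobenius-norm ratios at a cost of $\kappa^2$ per layer (which the paper packages as a standalone lemma). The only cosmetic difference is the choice of split in part (a) --- you use $\widetilde{W}_l(\widetilde{h}_{l-1}-h_{l-1}) + \Delta W_l\, h_{l-1}$ where the paper uses $\Delta W_L\,\widetilde{h}_{L-1} + W_L(\widetilde{h}_{L-1}-h_{L-1})$ --- and both yield the same recursion and the same constants.
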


The proof is given in the appendix. The closest existing result that we are aware of is \citep[Lemma~2]{neyshabur2018a}, which bounds functional distance but not Jacobian distance. Also, since \citep[Lemma~2]{neyshabur2018a} only holds for small perturbations, it misses the product structure of Theorem \ref{thm:reldiff}. It is the product structure that encodes the interactions between perturbations to different layers.

\begin{figure*}
    \centering
    \includegraphics[width=0.49\linewidth]{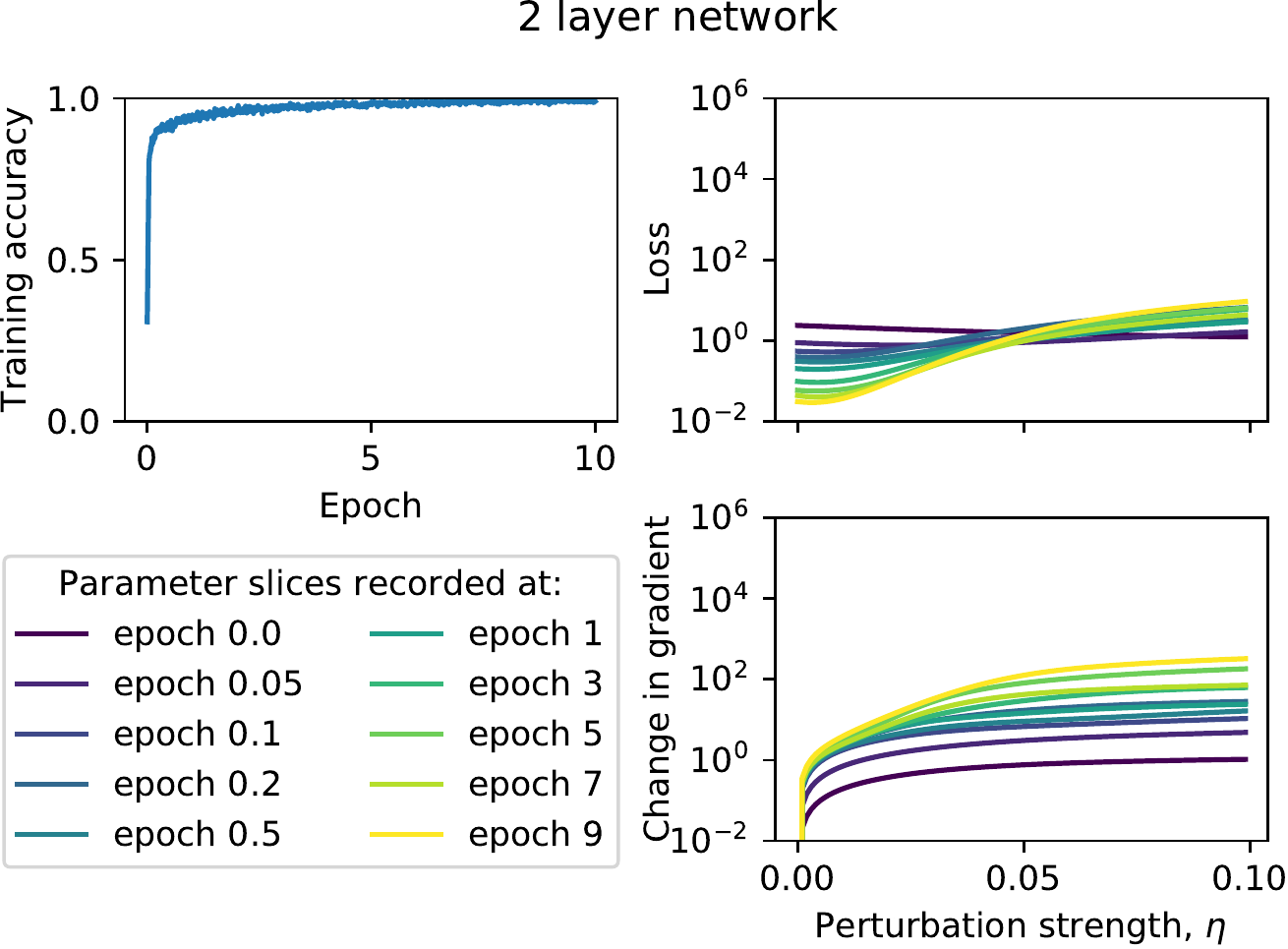}      \includegraphics[width=0.49\linewidth]{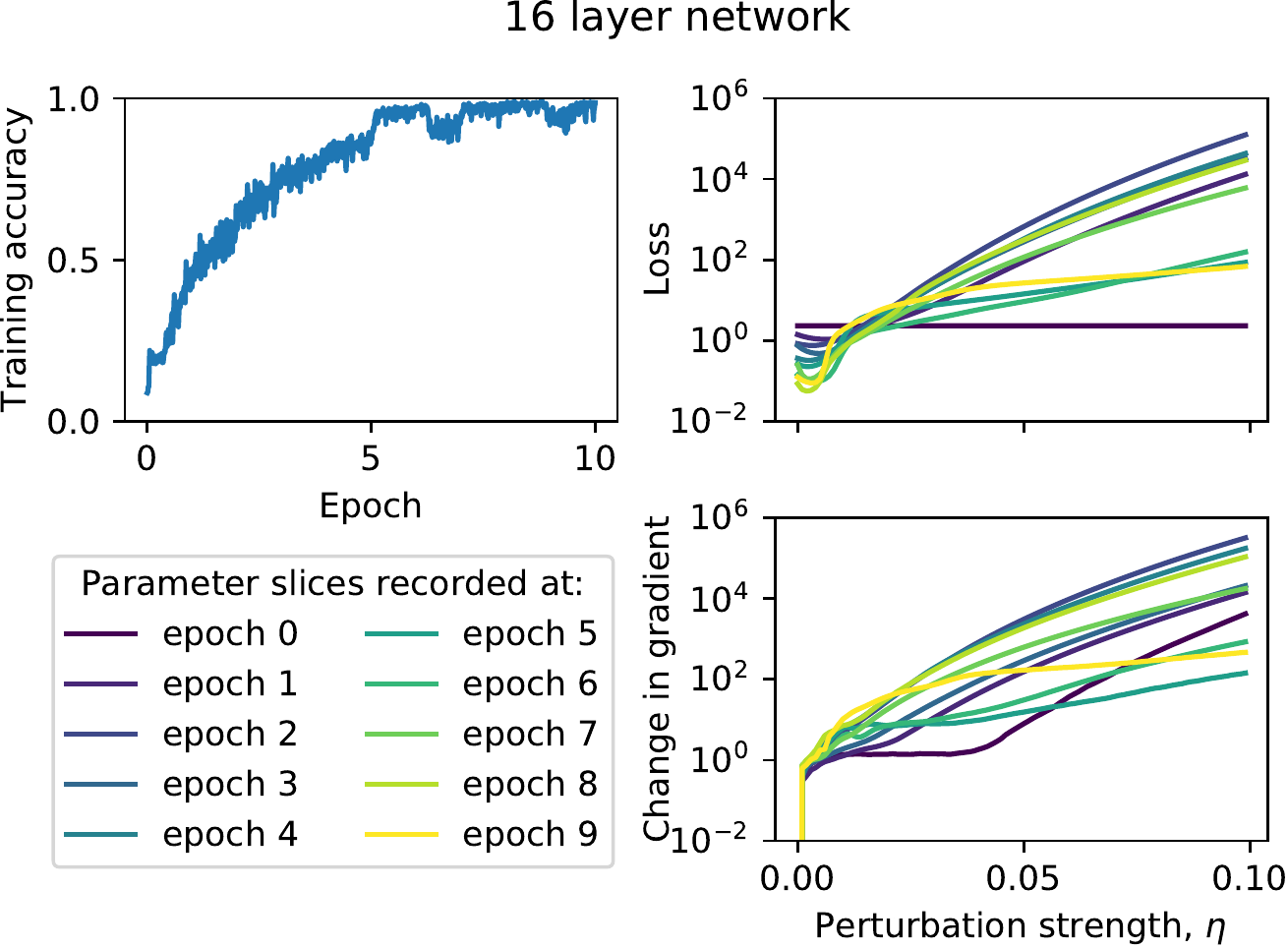}
    \caption{Using \fromage, we train a 2-layer (left) and 16-layer (right) perceptron to classify the MNIST dataset. With the network frozen at ten different training checkpoints, we first compute the gradient of the $l$th layer $g_l$ using the full data batch. We then record the loss and full batch gradient $\widetilde{g}_l$ after perturbing all weight matrices $W_l$ ($l=1,...,L$) to $W_l -\eta \cdot g_l \frac{\norm{W_l}_F}{\norm{g_l}_F}$ for various perturbation strengths $\eta$. We plot the classification loss and the relative change in gradient of the input layer $\norm{\widetilde{g}_1 - g_1}_F/\norm{g_1}_F$ along these parameter slices, all on a log scale. We find that the loss and relative change in gradient grow quasi-exponentially when the perceptron is deep, suggesting that Euclidean trust is violated. As such, these results are more consistent with our notion of deep relative trust.}
    \label{fig:curvature}
\end{figure*}

In words, Theorem \ref{thm:reldiff} says that the relative change of a multilayer perceptron in terms of both function and gradient is controlled by a product over relative perturbations to each layer. Bounding the relative change in function $f$ in terms of the relative change in parameters $W$ is reminiscent of a concept from numerical analysis known as the \emph{relative condition number}. The relative condition number of a numerical technique measures the sensitivity of the technique to input perturbations. This suggests that we may think of Theorem \ref{thm:reldiff} as establishing the relative condition number of a neural network with respect to parameter perturbations.

The two most striking consequences of Theorem \ref{thm:reldiff} are that for a multilayer perceptron:
\begin{enumerate}
    \item The trust region is not quadratic, but rather quasi-exponential at large depth $L$.
    \item The trust region depends on the relative strength of perturbations to each layer.
\end{enumerate}
To validate these theoretical consequences, Figure \ref{fig:curvature} displays the results of an empirical study. For two multilayer perceptrons of depth 2 and 16, we measured the stability of the loss and gradient to perturbation at various stages of training. For the depth 16 network, we found that the loss and gradient did break down quasi-exponentially in the layerwise relative size of a parameter perturbation. For the depth 2 network, the breakdown was much milder.

We will now discuss the plausibility of the assumptions made in Theorem \ref{thm:reldiff}. The first assumption amounts to assuming that the nonlinearity must transmit a certain fraction of its input. This is satisfied, for example, by the ``leaky relu'' nonlinearity, where for $0< a \leq 1$:
$$\mathrm{leaky\_relu}(x) := \begin{cases} x \quad \text{if }x\geq0;\\
a x \quad \text{if }x<0.
\end{cases}$$
Many nonlinearities only transmit half of their input domain---for example, the relu nonlinearity:
$$\mathrm{relu}(x) := \max(0,x).$$
Although relu is technically not covered by Theorem \ref{thm:reldiff}, we may model the fact that it transmits half its input domain by setting $\alpha = \beta = \half$.

The second assumption is that all weight matrices and perturbations are full rank. In general this assumption may be violated. But provided that a small amount of noise is present in the updates, then by smoothed analysis of the matrix condition number \citep{Brgisser2010SmoothedAO,smoothed} it may often hold in practice.
\section{Descent under deep relative trust}
\label{sec:descent}

In the last section we studied the relative functional difference and gradient difference between two neural networks. Whilst the relative breakdown in gradient is intuitively an important object for optimisation theory, to make this intuition rigorous we have derived the following lemma:
\begin{restatable}{lemma}{ftc}\label{lem:ftc}
    Consider a continuously differentiable function $\mathcal{L}:\R^n\rightarrow \R$ that maps $W \mapsto \mathcal{L}(W)$. Suppose that parameter vector $W$ decomposes into $L$ parameter groups: $W = (W_1,W_2,...,W_L)$, and consider making a perturbation $\Delta W = (\Delta W_1,\Delta W_2,...,\Delta W_L)$. Let $\theta_l$ measure the angle between $\Delta W_l$ and negative gradient $-g_l(W):=-\nabla_{W_l}\mathcal{L}(W)$. Then:
    \begin{gather*}\mathcal{L}(W+\Delta W) - \mathcal{L}(W)\leq - \sum_{l=1}^L \norm{g_l(W)}_F \norm{\Delta W_l}_F\left[\cos \theta_l - \max_{t\in[0,1]}\frac{\norm{g_l(W+t\Delta W)-g_l(W)}_F}{\norm{g_l(W)}_F}\right].\end{gather*}
\end{restatable}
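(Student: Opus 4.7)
The plan is to apply the fundamental theorem of calculus along the straight-line path from $W$ to $W+\Delta W$, then split the resulting integral into a first-order term that yields the $\cos\theta_l$ contribution and a remainder term that captures how much the gradient drifts along the path.

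First, since $\mathcal{L}$ is continuously differentiable, I would write
\begin{equation*}
\mathcal{L}(W+\Delta W) - \mathcal{L}(W) = \int_0^1 \ip{\nabla \mathcal{L}(W+t\Delta W)}{\Delta W}\,dt,
\end{equation*}
and then split the inner product across the $L$ parameter groups so that $\ip{\nabla\mathcal{L}(W+t\Delta W)}{\Delta W} = \sum_{l=1}^L \ip{g_l(W+t\Delta W)}{\Delta W_l}$, treating each inner product with the Frobenius inner product on matrices.

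Next, I would add and subtract $g_l(W)$ inside each inner product, which separates the integral into a constant piece plus a deviation piece:
\begin{equation*}
\int_0^1 \ip{g_l(W+t\Delta W)}{\Delta W_l}\,dt = \ip{g_l(W)}{\Delta W_l} + \int_0^1 \ip{g_l(W+t\Delta W) - g_l(W)}{\Delta W_l}\,dt.
\end{equation*}
For the constant piece, the definition of $\theta_l$ as the angle between $\Delta W_l$ and $-g_l(W)$ gives $\ip{g_l(W)}{\Delta W_l} = -\norm{g_l(W)}_F \norm{\Delta W_l}_F \cos\theta_l$. For the deviation piece, I would apply Cauchy--Schwarz pointwise in $t$ and then bound the integral by the supremum:
\begin{equation*}
\int_0^1 \ip{g_l(W+t\Delta W) - g_l(W)}{\Delta W_l}\,dt \leq \norm{\Delta W_l}_F \max_{t\in[0,1]} \norm{g_l(W+t\Delta W) - g_l(W)}_F.
\end{equation*}

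Summing over $l$ and factoring out $\norm{g_l(W)}_F \norm{\Delta W_l}_F$ from each summand yields exactly the stated bound. I do not expect a serious obstacle here: the argument is a clean telescoping of FTC plus Cauchy--Schwarz, and the only subtlety is correctly tracking the sign in the $\cos\theta_l$ term (since $\theta_l$ is measured against $-g_l$, not $+g_l$) and ensuring that the $\max_t$ can be pulled outside the integral, which is immediate because the integrand is bounded above by its supremum over the unit interval.
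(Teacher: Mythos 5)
Your proposal is correct and follows essentially the same route as the paper's proof: the fundamental theorem of calculus along the segment from $W$ to $W+\Delta W$, the add-and-subtract of $g_l(W)$ to isolate a first-order term handled by the cosine formula, and a Cauchy--Schwarz plus sup bound on the remainder integral (what the paper calls the integral estimation lemma). No gaps; your version simply spells out the steps the paper states tersely.
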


In words, the inequality says that the change in a function due to an input perturbation is bounded by the product of three terms, summed over parameter groups. The first two terms measure the size of the gradient and the size of the perturbation. The third term is a tradeoff between how aligned the perturbation is with the gradient, and how fast the gradient breaks down. Informally, the result says: \textit{to reduce a function, follow the negative gradient until it breaks down}. 

Descent is guaranteed when the bracketed term in Lemma \ref{lem:ftc} is positive. That is, for $l=1,...,L$:
\begin{equation}\label{eq:descent}
    \max_{t\in[0,1]}\frac{\norm{g_l(W+t\Delta W)-g_l(W)}_F}{\norm{g_l(W)}_F}<\cos\theta_l.
\end{equation}
Geometrically this condition requires that, for every parameter group, the maximum change in gradient along the perturbation be smaller than the projection of the gradient in that direction. 
The simplest strategy to meet this condition is to choose $\Delta W_l = -\eta\,g_l(W)$. The learning rate $\eta$ must be chosen small enough such that, for all parameter groups $l$, the lefthand side of (\ref{eq:descent}) is smaller than $\cos \theta_l = 1$. This is what is done in practice when gradient descent is used to train neural networks.

To do better than blindly tuning the learning rate in gradient descent, we need a way to estimate the relative gradient breakdown that appears in (\ref{eq:descent}). The gradient of the loss is decomposed in Equation \ref{eq:chain}---let us inspect that result. Three of the four terms involve either a network Jacobian or a subnetwork output, for which relative change is governed by Theorem \ref{thm:reldiff}. Since Theorem \ref{thm:reldiff} governs relative change in three of the four terms in Equation \ref{eq:chain}, we propose extending it to cover the whole expression---we call this modelling assumption \textit{deep relative trust}.
\begin{assumption}[Deep relative trust] Consider a neural network with $L$ layers and parameters $W = (W_1,W_2,...,W_L)$. Consider parameter perturbation $\Delta W = (\Delta W_1,\Delta W_2,...,\Delta W_L)$. Let $g_l$ denote the gradient of the network loss function $\mathcal{L}$ with respect to parameter matrix $W_l$. Then the gradient breakdown is bounded by:
$$\frac{\norm{g_l(W+\Delta W)-g_l(W)}_F}{\norm{g_l(W)}_F} \leq \prod_{k=1}^L \left(1+\frac{\norm{\Delta W_k}_F}{\norm{W_k}_F}\right) - 1.$$
\end{assumption}
Deep relative trust applies the functional form that appears in Theorem \ref{thm:reldiff} to the gradient of the loss function. As compared to Theorem \ref{thm:reldiff}, we have set $\alpha=\beta=\half$ (a model of relu) and $\kappa=1$ (a model of well-conditioned matrices). Another reason that deep relative trust is a \emph{model} rather than a \emph{theorem} is that we have neglected the $\partial\mathcal{L}/\partial f$ term in Equation \ref{eq:chain} which depends on the choice of loss function.

Given that deep relative trust appears to penalise the relative size of perturbations to each layer, it is natural that our learning algorithm should account for this by ensuring that layerwise perturbations are bounded like $\norm{\Delta W_l}_F/\norm{W_l}_F\leq\eta$ for some small $\eta >0$. The following lemma formalises this idea. As far as we are aware, this is the first descent lemma tailored to the neural network structure.

\begin{restatable}{lemma}{descentlemma}\label{thm:relative}
    Let $\mathcal{L}$ be a continuously differentiable loss function for a neural network of depth $L$ that obeys deep relative trust. Consider a perturbation $\Delta W = (\Delta W_1,\Delta W_2,...,\Delta W_L)$ to the parameters $W = (W_1,W_2,...,W_L)$ with layerwise bounded relative size, meaning that $\norm{\Delta W_l}_F/\norm{W_l}_F\leq\eta$ for $l=1,...,L$. Let $\theta_l$ measure the angle between $\Delta W_l$ and $-g_l(W)$. The perturbation will decrease the loss function provided that for all $l=1,...,L$:
    $$\eta < (1 + \cos \theta_l)^\frac{1}{L} -1.$$
\end{restatable}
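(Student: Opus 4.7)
The plan is to chain together Lemma \ref{lem:ftc} and the deep relative trust assumption, and then reduce the resulting condition to the stated constraint on $\eta$.

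First I would invoke Lemma \ref{lem:ftc}, which reduces descent to verifying that for each layer $l$,
\[
    \max_{t\in[0,1]}\frac{\norm{g_l(W+t\Delta W)-g_l(W)}_F}{\norm{g_l(W)}_F}<\cos\theta_l.
\]
So it suffices to bound the left-hand side under the hypotheses of the lemma.

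Next I would apply deep relative trust to the scaled perturbation $t\Delta W$, which replaces each $\Delta W_k$ by $t\Delta W_k$ in the bound:
\[
    \frac{\norm{g_l(W+t\Delta W)-g_l(W)}_F}{\norm{g_l(W)}_F} \leq \prod_{k=1}^L\left(1+\frac{t\norm{\Delta W_k}_F}{\norm{W_k}_F}\right)-1.
\]
Each factor on the right is nondecreasing in $t\in[0,1]$, so the maximum over $t$ is attained at $t=1$. Using the layerwise relative-size assumption $\norm{\Delta W_k}_F/\norm{W_k}_F\leq\eta$, each factor is at most $1+\eta$, and thus
\[
    \max_{t\in[0,1]}\frac{\norm{g_l(W+t\Delta W)-g_l(W)}_F}{\norm{g_l(W)}_F} \leq (1+\eta)^L-1.
\]

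Finally I would plug this bound into the descent condition: it suffices to have $(1+\eta)^L-1<\cos\theta_l$ for every $l$, which rearranges to $\eta<(1+\cos\theta_l)^{1/L}-1$, matching the claim. The argument is essentially mechanical; the only subtle points are noting that deep relative trust may be applied with $t\Delta W$ in place of $\Delta W$ (legitimate since it holds for arbitrary perturbations) and verifying the monotonicity in $t$ so that the supremum over $[0,1]$ reduces to evaluation at $t=1$. Neither is a real obstacle, so I do not anticipate a technical hurdle beyond bookkeeping.
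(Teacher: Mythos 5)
Your proposal is correct and follows essentially the same route as the paper's proof: apply deep relative trust to the scaled perturbation $t\Delta W$, observe monotonicity in $t$ so the supremum is at $t=1$, bound each factor by $1+\eta$, and require $(1+\eta)^L - 1 < \cos\theta_l$ to make the bracketed term in Lemma \ref{lem:ftc} positive. No gaps.
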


\begin{figure}
\begin{minipage}{0.5\linewidth}
\begin{algorithm}[H]
\caption{\textit{Fromage}---a good default $\eta=0.01$.}
   \label{alg:clamp}
\begin{algorithmic}
   \STATE {\bfseries Input:} learning rate $\eta$ and matrices $\{W_l\}_{l=1}^L$
   \REPEAT
   \STATE collect gradients $\{g_l\}_{l=1}^L$
   \FOR{layer $l=1$ {\bfseries to} $L$}
   \STATE $W_l \leftarrow \frac{1}{\sqrt{1+\eta^2}}\left[W_l - \eta \cdot\frac{\norm{W_l}_F}{\norm{g_l}_F}\cdot g_l\right]$
  \ENDFOR
   \UNTIL{converged}
\end{algorithmic}
\end{algorithm}
\vspace{1em}
\end{minipage}\hfill\begin{minipage}{0.45\linewidth}
\centering
\includegraphics[width=\linewidth]{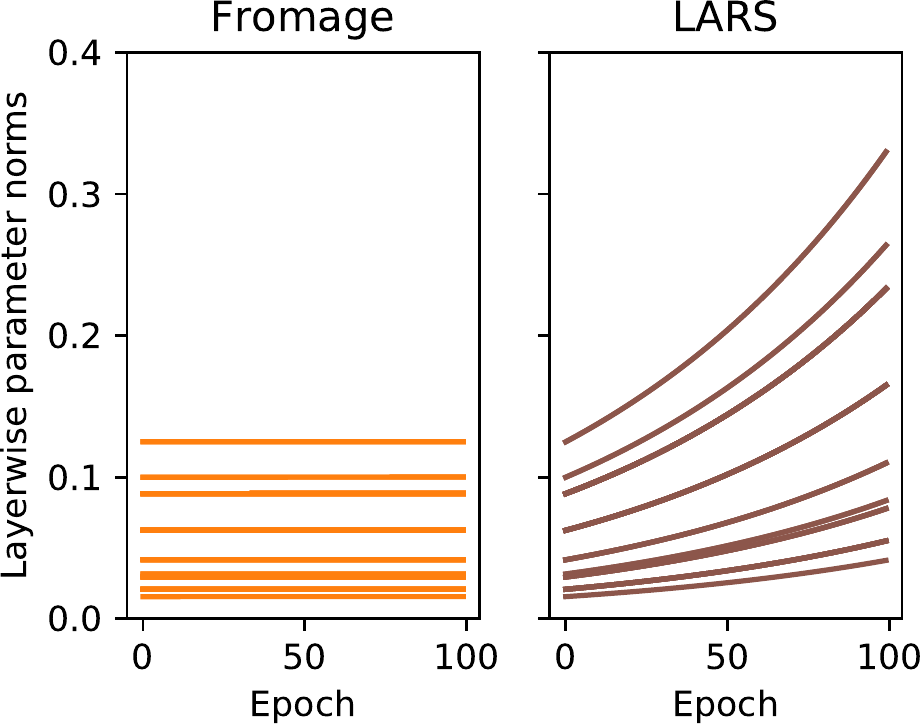}
\end{minipage}
\caption{Left: the Fromage optimiser. Fromage differs to LARS \citep{You:EECS-2017-156} by the $1/\sqrt{1+\eta^2}$ prefactor. Right: without the prefactor, LARS suffers compounding growth in rescaling invariant layers that eventually leads to numerical overflow. This example is for a spectrally normalised cGAN \citep{miyato2018cgans,miyato2018spectral}.}
\label{fig:fromage}
\end{figure}

\section{Frobenius matched gradient descent}

In the previous section we established a descent lemma that takes into account the neural network structure. It is now time to apply that lemma to derive a learning algorithm.

The principle that we shall adopt is to make the largest perturbation that still guarantees descent by Lemma \ref{thm:relative}. To do this, we need to set $\cos\theta_l=1$ whilst maintaining $\norm{\Delta W_l}_F\leq\eta\,\norm{W_l}_F$ for every layer $l=1,...,L$. This is achieved by the following update:
\begin{equation}
\Delta W_l = - \eta \cdot \frac{\norm{W_l}_F}{\norm{g_l}_F}\cdot g_l \qquad \text{for } l=1,...,L. \label{eq:from}
\end{equation}
This learning rule is similar to LARS (layerwise adaptive rate scaling) proposed on empirical grounds by \citet{You:EECS-2017-156}. The authors demonstrated that LARS stabilises large batch network training. 

Unfortunately, there is still an issue with this update rule that needs to be addressed. Neural network layers that involve \textit{batch norm} \citep{batchnorm} or \textit{spectral norm} \citep{miyato2018spectral} are invariant to the rescaling map $W_l\rightarrow W_l + \alpha\,W_l$ and therefore the corresponding gradient $g_l$ must lie orthogonal to $W_l$. This means that the learning rule in Equation \ref{eq:chain} will increase weight norms in these layers by a constant factor every iteration. To see this, we argue by Pythagoras' theorem combined with Equation \ref{eq:from}:

\begin{minipage}{0.5\linewidth}
\begin{align*}&\norm{W_l + \Delta W_l}_F^2\\
&= \norm{W_l}_F^2 + \norm{\Delta W_l}_F^2\\ &=(1+\eta^2)\,\norm{W_l}_F^2.
\end{align*}
\end{minipage}\hfill\begin{minipage}{0.5\linewidth}
    \begin{tikzpicture}
    \draw[gray, dashed] (2,2) circle (1cm);
    \draw[thick,-{latex[scale=2.0]}] (2,2) -- (2.7071,2.7071);
    \draw[thick,-{latex[scale=2.0]}] (2.7071,2.7071) -- (2.2071,3.2071);
    \draw[thick,dashed,-{latex[scale=2.0]}] (2,2) -- (2.2071,3.2071);
    \node at (3.1,3.1) {$\norm{\Delta W_l}_F$};
    \node at (2.9,2.2) {$\norm{W_l}_F$};
    \node at (0.7,2.6) {$\sqrt{1+\eta^2}\cdot\norm{W_l}_F$};
    \end{tikzpicture}    
\end{minipage}

This is \textit{compounding growth} meaning that it will lead to numerical overflow if left unchecked. We empirically validate this phenomenon in Figure \ref{fig:fromage}. One way to solve this problem is to introduce and tune an additional weight decay hyperparameter, and this is the strategy adopted by LARS. In this work we are seeking to dispense with unnecessary hyperparameters, and therefore we propose explicitly correcting this instability via an appropriate prefactor. We call our algorithm Frobenius matched gradient descent---or \textit{Fromage}. See Algorithm \ref{alg:clamp} above.

The attractive feature of Algorithm \ref{alg:clamp} is that there is only one hyperparameter and its meaning is clear. Neglecting the second order correction, we have that for every layer $l=1,...,L$, the algorithm's update satisfies:
\begin{equation}
    \frac{\norm{\Delta W_l}_F}{\norm{W_l}_F} = \eta. \label{eq:perturb}
\end{equation}
In words: the algorithm induces a relative change of $\eta$ in each layer of the neural network per iteration.

\section{Empirical study}

\begin{figure}
    \centering
    \includegraphics[height=0.25\linewidth]{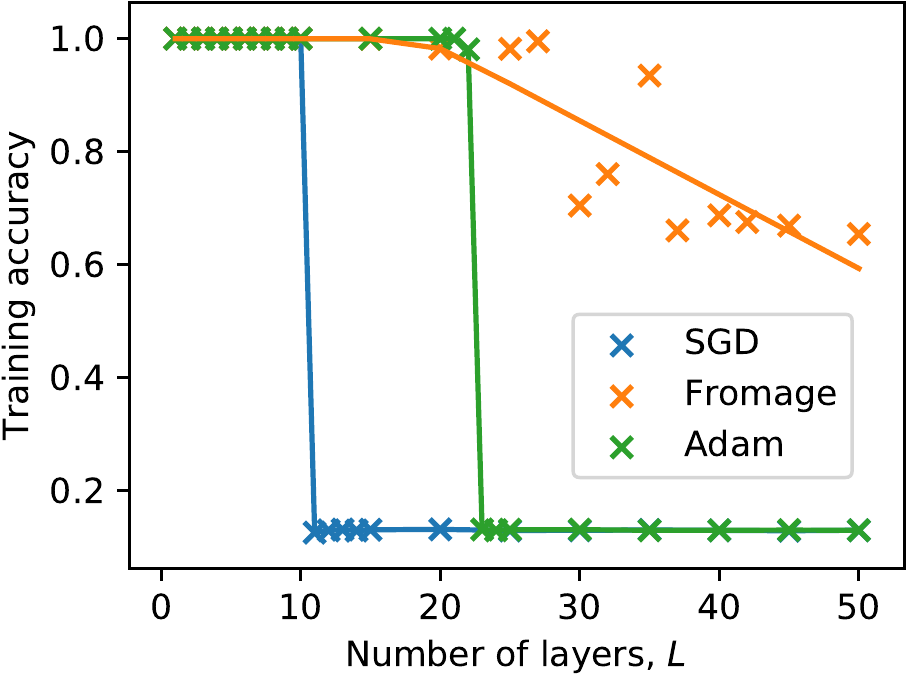}\hspace{3em}\includegraphics[height=0.25\linewidth]{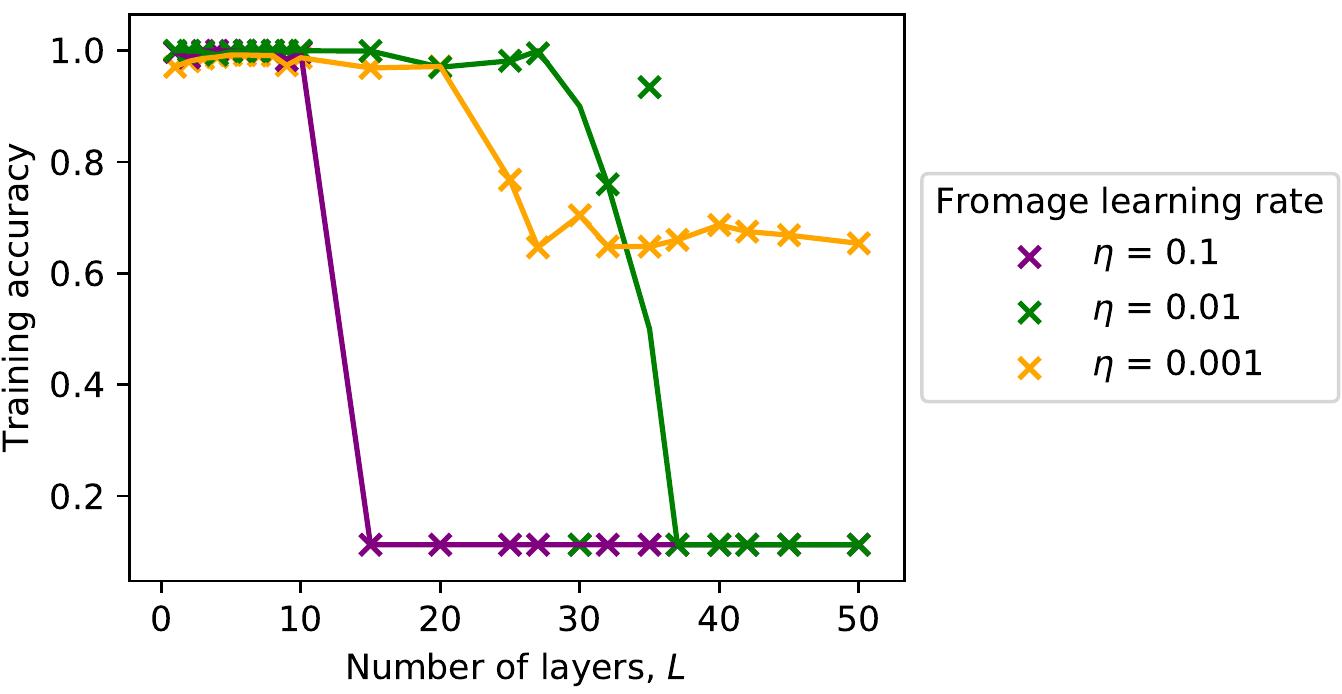}
    \caption{Training multilayer perceptrons at depths challenging for existing optimisers. We train multilayer perceptrons of depth $L$ on the MNIST dataset. At each depth, we plot the training accuracy after 100 epochs. Left: for each algorithm, we plot the best performing run over 3 learning rate settings found to be appropriate for that algorithm. We also plot trend lines to help guide the eye. Right: the Fromage results are presented for each learning rate setting. Since for deeper networks a smaller value of $\eta$ was needed in Fromage, these results provide partial support for Lemma \ref{thm:relative}.}
    \label{fig:depth}
\end{figure}
\begin{figure}
    \centering
    \includegraphics[height=0.25\linewidth]{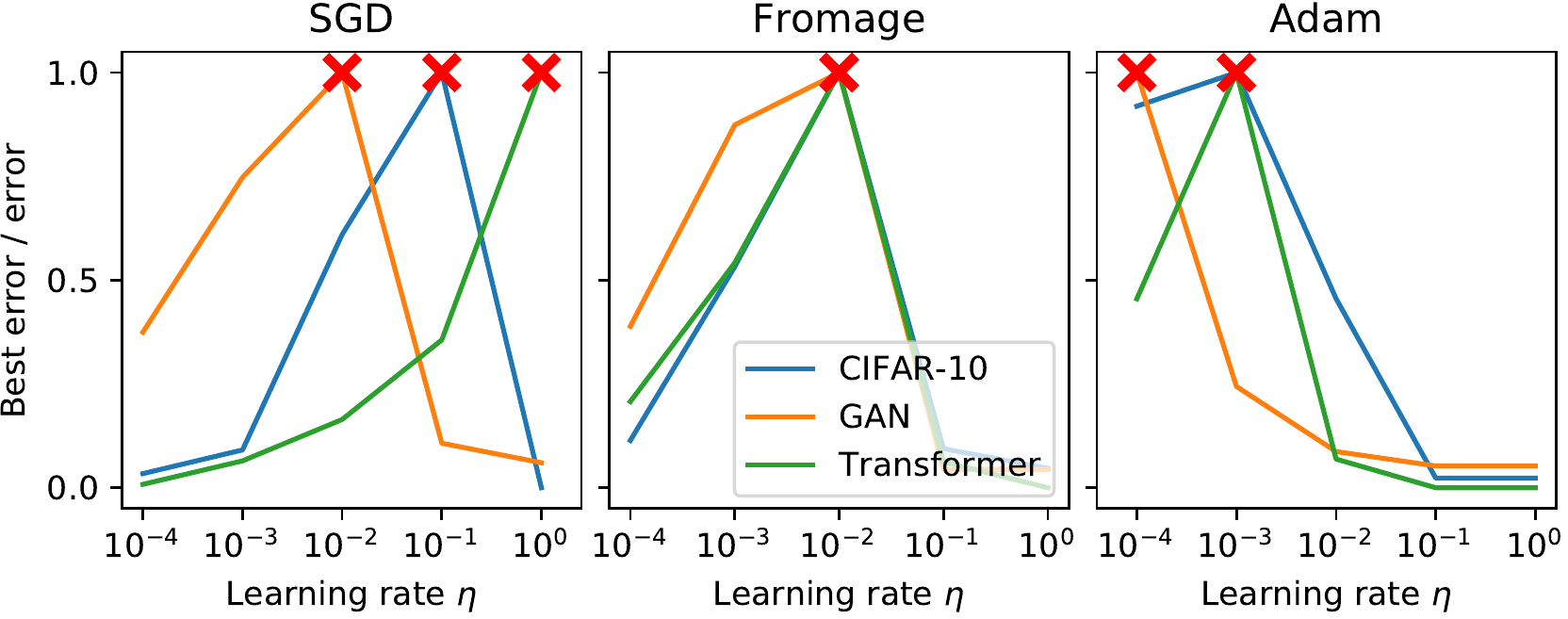}
    \caption{Learning rate tuning for standard benchmarks. For each learning rate setting $\eta$, we plot the error at the best tuned $\eta$ divided by the error for that $\eta$, so that a value of 1.0 corresponds to the best learning rate setting for that task. For Fromage, the setting of $\eta=0.01$ was optimal across all tasks.}
    \label{fig:lr}
\end{figure}

To test the main prediction of our theory---that the function and gradient of a deep network break down quasi-exponentially in the size of the perturbation---we directly study the behaviour of a multilayer perceptron trained on the MNIST dataset \citep{mnist} under parameter perturbations. Perturbing along the gradient direction, we find that for a deep network the change in gradient and objective function indeed grow quasi-exponentially in the relative size of a parameter perturbation (Figure \ref{fig:curvature}).

The theory also predicts that the geometry of trust becomes increasingly pathological as the network gets deeper, and \fromage is specifically designed to account for this. As such, it should be easier to train very deep networks with Fromage than by using other optimisers. Testing this, we find that off-the-shelf optimisers are unable to train multilayer perceptrons (without batch norm \citep{batchnorm} or skip connections \citep{kaiming}) over 25 layers deep; \fromage was able to train up to at least depth 50 (Figure \ref{fig:depth}).

Next, we benchmark \fromage on four canonical deep learning tasks: classification of CIFAR-10 \& ImageNet, generative adversarial network (GAN) training on CIFAR-10, and transformer training on Wikitext-2. In theory, Fromage should be easy to use because its one hyperparameter is meaningful for neural networks and is governed by a descent lemma (Lemma \ref{thm:relative}). The results are given in Table \ref{tab:train}. Since this paper is about optimisation, all results are reported on the training set. Test set results and full experimental details are given in \ref{app:exp}. Fromage achieved lower training error than SGD on all four tasks (and by a substantial margin on three tasks). Fromage also outperformed Adam on three out of four tasks. Most importantly, Fromage used the same learning rate $\eta=0.01$ across all tasks. In contrast, the learning rate in SGD and Adam needed to be carefully tuned, as shown in Figure \ref{fig:lr}. Note that for all algorithms, $\eta$ was decayed by 10 when the loss plateaued.

It should be noted that in preliminary runs of both the CIFAR-10 and Transformer experiments, Fromage would heavily overfit the training set. We were able to correct this behaviour for the results presented in Tables \ref{tab:train} and \ref{tab:test} by constraining the neural architecture. In particular, each layer's parameter norm was constrained to be no larger than its norm at initialisation.

\begin{table}
\caption{Training results. We quote loss for the classifiers, FID \citep{ttur} for the GAN, and perplexity for the transformer---so lower is better. Test results and experimental details are given in \ref{app:exp}. For all algorithms, $\eta$ was decayed by 10 when the loss plateaued.
}\label{tab:train}
\vspace{-1em}
\begin{center}
 \resizebox{\columnwidth}{!}{
 \setlength{\tabcolsep}{4pt}
  \begin{tabular}{ccccccc}
    \toprule
    \textbf{Benchmark} & 
    \begin{tabular}{@{}c@{}}\textbf{SGD}\\$\boldsymbol{\eta}$\end{tabular} &
    \begin{tabular}{@{}c@{}}\textbf{Fromage}\\$\boldsymbol{\eta}$\end{tabular} &
    \begin{tabular}{@{}c@{}}\textbf{Adam}\\$\boldsymbol{\eta}$\end{tabular} &
    \textbf{SGD} &    
    \textbf{Fromage} & 
    \textbf{Adam} \\
    \midrule
    CIFAR-10 & 0.1 & 0.01 & 0.001 & $(1.5\pm0.2)\times10^{-4}$ & $\pmb{(2.5\pm0.5)\times10^{-5}}$ & $(6\pm3)\times10^{-5}$\\
    ImageNet & 1.0 & 0.01 & 0.001 & $2.020\pm0.003$ & $\pmb{2.001\pm0.001}$ & $2.02\pm0.01$\\
    GAN      & 0.01 & 0.01 & 0.0001& $34\pm2$  & $\pmb{16\pm1}$ & $23.7\pm0.7$\\
    Transformer & 1.0 & 0.01 & 0.001 & $150.0 \pm 0.3$ & $66.1\pm0.1$ & \pmb{$36.8\pm0.1$}\\
    \bottomrule
  \end{tabular}
}
\end{center}

\end{table}
\section{Discussion}

The effort to alleviate learning rate tuning in deep learning is not new. For instance, \citet{pmlr-v28-schaul13} tackled this problem in a paper titled ``No More Pesky Learning Rates''. Nonetheless---in practice---the need for learning rate tuning has persisted \citep{deeplearningbook}. Then what sets our paper apart? The main contribution of our paper is its effort to explicitly model deep network gradients via \textit{deep relative trust}, and to connect this model to the simple Fromage learning rule. In contrast, \citeauthor{pmlr-v28-schaul13}'s work relies on estimating curvature information in an online fashion---thus incurring computational overhead while not providing clear insight on the \textit{meaning} of the learning rate for neural networks.

To what extent may our paper eliminate pesky learning rates? Lemma \ref{thm:relative} and the experiments in Figure \ref{fig:depth} suggest that---for multilayer perceptrons---Fromage's optimal learning rate should depend on network depth. However, the results in Table \ref{tab:train} show that across a variety of more practically interesting benchmarks, Fromage's initial learning rate did not require tuning. It is important to point out that these practical benchmarks employ more involved network architectures than a simple multilayer perceptron. For instance, they use skip connections \citep{kaiming} which allow signals to skip layers. Future work could investigate how skip connections affect the deep relative trust model: an intuitive hypothesis is that they reduce some notion of \textit{effective depth} of very deep networks.

In conclusion, we have derived a distance on deep neural networks called \textit{deep relative trust}. We used this distance to derive a descent lemma for neural networks and a learning rule called \textit{Fromage}. While the theory suggests a depth-dependent learning rate for multilayer perceptrons, we found that Fromage did not require learning rate tuning in our experiments on more modern network architectures.

\section*{Broader impact}

This paper aims to improve our foundational understanding of learning in neural networks. This could lead to many unforeseen consequences down the line. An immediate practical outcome of the work is a learning rule that seems to require less hyperparameter tuning than stochastic gradient descent. This may have the following consequences:
\begin{itemize}
    \item Network training may become less time and energy intensive.
    \item It may become easier to train and deploy neural networks without human oversight.
    \item It may become possible to train more complex network architectures to solve new problems.
\end{itemize}
In short, this paper could make a powerful tool both easier to use and easier to abuse.
\section*{Acknowledgements and disclosure of funding}

The authors would like to thank Rumen Dangovski, Dillon Huff, Jeffrey Pennington, Florian Schaefer and Joel Tropp for useful conversations. They made heavy use of a codebase built by Jiahui Yu. They are grateful to Sivakumar Arayandi Thottakara, Jan Kautz, Sabu Nadarajan and Nithya Natesan for infrastructure support.

JB was supported by an NVIDIA fellowship, and this work was funded in part by NASA.  
\newpage
\section*{References}
\renewcommand{\bibsection}{}\vspace{0em}
\bibliography{refs}
\bibliographystyle{unsrtnat}

\newpage
\appendix
\renewcommand{\thesection}{Appendix \Alph{section}}
\section{A descent lemma for neural networks}
\ftc*
\begin{proof}
    By the fundamental theorem of calculus,
    $$\mathcal{L}(W+\Delta W) - \mathcal{L}(W) = \sum_{l=1}^L g_l(W)^T\Delta W_l + \int_0^1 \big[g_l(W+t\Delta W) - g_l(W)\big]^T\Delta W_l\idiff{t}.$$
    
    The result follows by replacing the first term on the righthand side by the cosine formula for the dot product, and bounding the second term via the integral estimation lemma.
\end{proof}
\descentlemma*
\begin{proof}
    Using the gradient reliability estimate from deep relative trust, we obtain that:
    $$\max_{t\in[0,1]}\frac{\norm{g_l(W+t\Delta W)-g_l(W)}_2}{\norm{g_l(W)}_2} 
    \leq \max_{t\in[0,1]} \prod_{k=1}^L \left(1+\frac{\norm{t\Delta W_k}_F}{\norm{W_k}_F}\right)  - 1
    \leq \prod_{k=1}^L \left(1+\frac{\norm{\Delta W_k}_F}{\norm{W_k}_F}\right)  - 1.$$
    
    To guarantee descent, we require that the bracketed term in Lemma \ref{lem:ftc} is positive for all $l=1,...,L$. By the previous inequality, this will occur provided that for all $l=1,...,L$:
    \begin{equation*}\label{cond:descent}
    \prod_{k=1}^L \left(1+\frac{\norm{\Delta W_k}_F}{\norm{W_k}_F}\right)  < 1 + \cos\theta_l.  
    \end{equation*}
    Since $\norm{\Delta W_k}_F/\norm{W_k}_F\leq\eta$ for $k=1,...,L$, this inequality will be satisfied provided that $(1+\eta)^L < 1 + \cos \theta_l$. After a simple rearrangement, we are done.
\end{proof}

\newpage
\section{A perturbation analysis of the multilayer perceptron}
\label{app:proofs}

We begin by fleshing out the analysis of the two-layer scalar network (Equation \ref{eq:toy-example}), since this example already goes a long way to exposing the relevant mathematical structure. 

Consider $f: \R \rightarrow \R$ defined by $f(x) = a\cdot b\cdot x$ for $a,b\in\R$. Also consider perturbed function $\widetilde{f}(x) = \widetilde{a}\cdot\widetilde{b}\cdot x$ where $\widetilde{a}:=a+\Delta a$ and $\widetilde{b}:= b+\Delta b$. The relative difference obeys:
\begin{align*}
    \frac{\abs{\widetilde{f}(x)-f(x)}}{\abs{f(x)}} = \labs{\frac{\widetilde{a}\widetilde{b}x-abx}{abx}} = \labs{\frac{(a+\Delta a)(b+\Delta b)-ab}{ab}} &= \labs{\left(1+\frac{\Delta a}{a}\right)\left(1+\frac{\Delta b}{b}\right)-1}\\
    &\leq \left(1+\frac{\abs{\Delta a}}{\abs{a}}\right)\left(1+\frac{\abs{\Delta b}}{\abs{b}}\right)-1.
\end{align*}

Our main theorem will generalise this argument to more involved cases:

\reldiff*

Though we shall prove the two parts of this theorem separately, the following lemma shall help in both cases.

\begin{lemma}[Relative matrix-matrix conditioning]\label{lem:multiplymatrix}
Consider any two matrices $\widetilde{M},M\in\R^{n\times m}$ with condition number bounded by $\kappa$. Then for any matrix $X$ and any non-zero matrix $Y$:
$$\frac{\norm{\widetilde{M}X}_F}{\norm{MY}_F} \leq \kappa^2 \frac{\norm{\widetilde{M}}_F\norm{X}_F}{\norm{M}_F\norm{Y}_F}.$$
\end{lemma}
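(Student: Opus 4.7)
The plan is to split the ratio $\norm{\widetilde{M}X}_F/\norm{MY}_F$ into a numerator bound and a denominator bound via the extremal singular values of $\widetilde{M}$ and $M$, and then to convert those singular-value bounds into Frobenius-norm bounds using the condition number hypothesis. The key observation driving everything is that bounding the condition number of a matrix squeezes all of its singular values into a narrow band, so each singular value is comparable to the Frobenius norm up to a $\sqrt{\min(n,m)}$ factor.

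First I would apply the standard submultiplicativity inequality $\norm{AB}_F \leq \sigma_{\max}(A)\,\norm{B}_F$, which follows by taking traces in $B^T A^T A B \psdle \sigma_{\max}(A)^2\, B^T B$. Applied with $A = \widetilde{M}$ and $B = X$, this controls the numerator. For the denominator I would use the matching lower bound $\norm{AB}_F \geq \sigma_{\min}(A)\,\norm{B}_F$, which holds whenever $A^T A \psdge \sigma_{\min}(A)^2\, \Id$; this is exactly what the bounded condition number on $M$ provides, since it forces every singular value of $M$ to be strictly positive. Combining these two inequalities already gives
\[
\frac{\norm{\widetilde{M}X}_F}{\norm{MY}_F} \leq \frac{\sigma_{\max}(\widetilde{M})}{\sigma_{\min}(M)} \cdot \frac{\norm{X}_F}{\norm{Y}_F}.
\]

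Next I would convert the extremal singular values into Frobenius norms. Writing $r := \min(n,m)$, the condition-number bound on $\widetilde{M}$ forces every one of its $r$ singular values to lie in the interval $[\sigma_{\max}(\widetilde{M})/\kappa,\, \sigma_{\max}(\widetilde{M})]$, so $\norm{\widetilde{M}}_F^2 \geq r\,\sigma_{\max}(\widetilde{M})^2/\kappa^2$ and hence $\sigma_{\max}(\widetilde{M}) \leq \kappa\,\norm{\widetilde{M}}_F/\sqrt{r}$. Symmetrically, every singular value of $M$ lies in $[\sigma_{\min}(M),\, \kappa\,\sigma_{\min}(M)]$, giving $\norm{M}_F^2 \leq r\,\kappa^2\,\sigma_{\min}(M)^2$ and hence $\sigma_{\min}(M) \geq \norm{M}_F/(\sqrt{r}\,\kappa)$.

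Forming the ratio, the two $\sqrt{r}$ factors cancel and the two $\kappa$ factors compound to yield the claimed prefactor $\kappa^2$, proving the lemma. I do not foresee any real obstacle here; the only thing to keep straight is the bookkeeping of where each factor of $\kappa$ enters (one from upper-bounding $\sigma_{\max}(\widetilde{M})$ in terms of $\norm{\widetilde{M}}_F$, one from lower-bounding $\sigma_{\min}(M)$ in terms of $\norm{M}_F$), and the observation that the dimensional factors $\sqrt{\min(n,m)}$ appearing in both conversions cancel exactly when numerator and denominator are assembled.
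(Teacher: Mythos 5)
Your proof is correct and takes essentially the same route as the paper's: bound the ratio by $\sigma_{\max}(\widetilde{M})/\sigma_{\min}(M)$ times $\norm{X}_F/\norm{Y}_F$ (the paper obtains this by letting the matrices act on the columns of $X$ and $Y$, which is just the columnwise form of your submultiplicativity inequalities), then use the condition-number hypothesis to squeeze all singular values into a band so that $\sigma_{\max}(\widetilde{M}) \leq \kappa\,\norm{\widetilde{M}}_F/\sqrt{r}$ and $\sigma_{\min}(M) \geq \norm{M}_F/(\kappa\sqrt{r})$, with the dimensional factors cancelling. No changes needed.
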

\begin{proof}
    Denote the singular values of $M$ by $\sigma_1\geq \sigma_2 \geq...\geq \sigma_{\overline{n}}$ and the singular values of $\widetilde{M}$ by $\widetilde{\sigma}_1\geq \widetilde{\sigma}_2 \geq...\geq \widetilde{\sigma}_{\overline{n}}$. Observe that by letting $\widetilde{M}$ and $M$ act on the columns of $X$ and $Y$ (denoted $x_i$ and $y_j$ respectively) we have:
    $$\frac{\norm{\widetilde{M}X}_F^2}{\norm{MY}_F^2} = \frac{\sum_i\norm{\widetilde{M}x_i}_2^2}{\sum_j\norm{My_j}_2^2} \leq \frac{\widetilde{\sigma}_1^2\sum_i\norm{x_i}_2^2}{\sigma_{\overline{n}}^2\sum_j\norm{y_j}_2^2} = \frac{\widetilde{\sigma}_1^2\norm{X}_F^2}{\sigma_{\overline{n}}^2\norm{Y}_F^2}.$$
    Since it holds that $\sigma_1/\sigma_{\overline{n}}\leq\kappa$ and $\widetilde{\sigma}_1/\widetilde{\sigma}_{\overline{n}}\leq\kappa$, we obtain the following inequalities:
    \begin{align*}
    \norm{\widetilde{M}}_F^2 &= \sum_{i=1}^{\overline{n}}\widetilde{\sigma}_i^2 \geq \overline{n}\widetilde{\sigma}_{\overline{n}}^2\geq \overline{n}\frac{\widetilde{\sigma}_1^2}{\kappa^2}.\\
    \norm{M}_F^2 &= \sum_{i=1}^{\overline{n}}\sigma_i^2 \leq \overline{n}\sigma_1^2 \leq \overline{n}\kappa^2 \sigma_{\overline{n}}^2.
    \end{align*}
    To complete the proof, we substitute these two results into the first inequality to yield:
    $$\frac{\norm{\widetilde{M}X}_F}{\norm{MY}_F} \leq \frac{\widetilde{\sigma}_1\norm{X}_F}{\sigma_{\overline{n}}\norm{Y}_F}\leq \kappa^2 \frac{\norm{\widetilde{M}}_F\norm{X}_F}{\norm{M}_F\norm{Y}_F}.$$
\end{proof}

With this tool in hand, let us proceed to prove part a) of Theorem \ref{thm:reldiff}.

\begin{proof}[Proof of Theorem \ref{thm:reldiff} part a)]
To make an inductive argument, we shall assume that the result holds for a network with $L-1$ layers. Extending to depth $L$, we have:
	\begin{align*}
	    \frac{\norm{\widetilde{f}(x)-f(x)}_2}{\norm{f(x)}_2}
		&= \frac{ \norm{(\phi \circ \widetilde{W}_L)\circ \widetilde{h}_{L-1}(x)-(\phi \circ W_L)\circ h_{L-1}(x)}_2}{ \norm{(\phi \circ W_L)\circ h_{L-1}(x)}_2}\\
		&\leq \frac{\beta}{\alpha}\frac{ \norm{ \widetilde{W}_L \widetilde{h}_{L-1}(x)- W_L h_{L-1}(x)}_2}{ \norm{W_L h_{L-1}(x)}_2} \tag{assumption on $\phi$}\\
		&=\frac{\beta}{\alpha}\frac{ \norm{ \Delta W_L \widetilde{h}_{L-1}(x)+ W_L(\widetilde{h}_{L-1}(x)- h_{L-1}(x))}_2}{ \norm{W_L h_{L-1}(x)}_2}\\
		&\leq\frac{\beta}{\alpha}\frac{ \norm{ \Delta W_L \widetilde{h}_{L-1}(x)}_2+ \norm{W_L(\widetilde{h}_{L-1}(x)- h_{L-1}(x))}_2}{ \norm{W_L h_{L-1}(x)}_2}\tag{triangle inequality}\\
		&\leq \frac{\beta}{\alpha}\kappa^2\left[\frac{ \norm{ \Delta W_L}_F}{\norm{W_L}_F}\frac{\norm{ \widetilde{h}_{L-1}(x)}_2}{ \norm{ h_{L-1}(x)}_2}+\frac{ \norm{\widetilde{h}_{L-1}(x)- h_{L-1}(x)}_2}{\norm{ h_{L-1}(x)}_2}\right].\tag{Lemma \ref{lem:multiplymatrix}}
	\end{align*}
    Whilst the second term may be bounded by the inductive hypothesis, we shall now show that the first term obeys:
    $$\frac{\norm{\widetilde{h}_{L-1}(x)}_2}{\norm{h_{L-1}(x)}_2}\leq \left(\frac{\beta}{\alpha} \kappa^2\right)^{L-1} \prod_{k=1}^{L-1} \left(1 + \frac{\norm{\Delta W_k}_F}{\norm{W_k}_F}\right) .$$
    We argue as follows:
    \begin{align*}
        \frac{\norm{\widetilde{h}_{L-1}(x)}_2}{\norm{h_{L-1}(x)}_2} &=
        \frac{\norm{\phi(\widetilde{W}_{L-1}\widetilde{h}_{L-2}(x))}_2}{\norm{\phi(W_{L-1}h_{L-2}(x))}_2} \\
        &\leq \frac{\beta}{\alpha} \frac{\norm{\widetilde{W}_{L-1}\widetilde{h}_{L-2}(x)}_2}{\norm{W_{L-1}h_{L-2}(x)}_2}\tag{assumption on $\phi$}\\
        &\leq \frac{\beta}{\alpha}\kappa^2 \frac{\norm{\widetilde{W}_{L-1}}_F}{\norm{W_{L-1}}_F} \frac{\norm{\widetilde{h}_{L-2}(x)}_2}{\norm{h_{L-2}(x)}_2} \tag{Lemma \ref{lem:multiplymatrix}} \\
        &\leq \frac{\beta}{\alpha} \kappa^2 \frac{\norm{W_{L-1}}_F + \norm{\Delta W_{L-1}}_F}{\norm{W_{L-1}}_F} \frac{\norm{\widetilde{h}_{L-2}(x)}_2}{\norm{h_{L-2}(x)}_2} \tag{triangle inequality}\\
        &= \frac{\beta}{\alpha} \kappa^2 \left(1+\frac{\norm{\Delta W_{L-1}}_F}{\norm{W_{L-1}}_F} \right)\frac{\norm{\widetilde{h}_{L-2}(x)}_2}{\norm{h_{L-2}(x)}_2}.
    \end{align*}
    The statement follows from an obvious induction on depth. Substituting this result and the inductive hypothesis back into the bound on $\norm{\widetilde{f}(x)-f(x)}_2/\norm{f(x)}_2$, we obtain:
    \begin{align*}
	    \frac{\norm{\widetilde{f}(x)-f(x)}_2}{\norm{f(x)}_2}
		&\leq \left(\frac{\beta}{\alpha} \kappa^2\right)^L \left[\frac{ \norm{ \Delta W_L}_F }{\norm{W_L}_F}\prod_{k=1}^{L-1} \left(1 + \frac{\norm{\Delta W_k}_F}{\norm{W_k}_F}\right)+\prod_{k=1}^{L-1} \left(1 + \frac{\norm{\Delta W_k}_F}{\norm{W_k}_F}\right) -1\right]\\
		&=\left(\frac{\beta}{\alpha}\kappa^2\right)^L \left[\prod_{k=1}^{L} \left(1 + \frac{\norm{\Delta W_k}_F}{\norm{W_k}_F}\right) -1\right].
	\end{align*}
\end{proof}

Let us now proceed to the second part of Theorem \ref{thm:reldiff}.

\begin{proof}[Proof of Theorem \ref{thm:reldiff} part b)]
    By Proposition \ref{prop:jacobian}, the layer-$l$-to-output Jacobian $J_l$ satisfies:
    \begin{align*}
        J_l := \frac{\partial f(x)}{\partial h_{l}} &= \Phi^\prime_L W_L \cdot \Phi^\prime_{L-1} W_{L-1} \cdot... \cdot\Phi^\prime_{l+1} W_{l+1}
    \end{align*}
    where $\Phi^\prime_k:=\mathrm{diag}\Big[\phi^\prime\big(W_k h_{k-1}(x)\big)\Big]$. Denote the perturbed version $\widetilde{J}_l$ by the product:
    \begin{align*}
        \widetilde{J}_l:= \frac{\partial \widetilde{f}(x)}{\partial h_{l}} &= (\Phi^\prime_L+\Delta \Phi^\prime_L) (W_L+\Delta W_L)\cdot...\cdot (\Phi^\prime_{l+1}+\Delta\Phi^\prime_{l+1}) (W_{l+1} + \Delta W_{l+1}).
    \end{align*}
    For the purpose of an inductive argument, let us define the tail $T_l$ of the Jacobian to satisfy:
    \begin{align*}
    J_l &= \Phi^\prime_L W_L T_l
    \end{align*}
    and similarly for the perturbed version $\widetilde{T}_l$. The inductive hypothesis then becomes:
    \begin{align*}
        \frac{\norm{\widetilde{T}_l - T_l}_F}{\norm{T_l}_F} \leq \left(\frac{\beta}{\alpha}\kappa^2\right)^{L-l-1}\left[ \prod_{k=l+1}^{L-1}\frac{\beta}{\alpha}\left(1+\frac{\norm{\Delta W_k}_F}{\norm{W_k}_F}\right)-1\right].
    \end{align*}
    We need to extend this to $J_l$. First note that by taking limits of the condition on the nonlinearity, we obtain that $0\leq \alpha \leq \phi^\prime(x)\leq \beta$ for all $x$. This implies that for all layers $l$ the entries of the diagonal matrix $\Phi^\prime_l$ lie between $\alpha$ and $\beta$ and the maximum entry of the diagonal matrix $\Delta \Phi^\prime_l$ is no larger than $\beta-\alpha$. We shall use this information along with the triangle inequality and Lemma \ref{lem:multiplymatrix} to obtain the following:
    \begin{align*}
        \frac{\norm{\widetilde{J}_l - J_l}_F}{\norm{J_l}_F} &= \frac{\norm{ (\Phi^\prime_L+\Delta \Phi^\prime_L) (W_L+\Delta W_L)\widetilde{T}_l - \Phi^\prime_L W_L T_l}_F}{\norm{\Phi^\prime_L W_L T_l}_F} \\
        &= \frac{\norm{ \Delta \Phi^\prime_L (W_L+\Delta W_L)\widetilde{T}_l + \Phi^\prime_L [(W_L+\Delta W_L)\widetilde{T}_l-W_L T_l]}_F}{\norm{\Phi^\prime_L W_L T_l}_F} \\
        &\leq \frac{\norm{ \Delta \Phi^\prime_L (W_L+\Delta W_L)\widetilde{T}_l}_F + \norm{\Phi^\prime_L [(W_L+\Delta W_L)\widetilde{T}_l-W_L T_l]}_F}{\norm{\Phi^\prime_L W_L T_l}_F}\\
        &\leq \frac{(\beta-\alpha)\norm{(W_L+\Delta W_L)\widetilde{T}_l}_F + \beta\norm{(W_L+\Delta W_L)\widetilde{T}_l-W_L T_l}_F}{\alpha\norm{W_L T_l}_F} \\
        &\leq \frac{(\beta-\alpha)\norm{(W_L+\Delta W_L)\widetilde{T}_l}_F + \beta\norm{\Delta W_L\widetilde{T}_l}_F+\beta\norm{W_L(\widetilde{T}_l- T_l)}_F}{\alpha\norm{W_L T_l}_F}\\
        &\leq \kappa^2 \frac{(\beta-\alpha))\norm{W_L+\Delta W_L}_F\norm{\widetilde{T}_l}_F + \beta\norm{\Delta W_L}_F\norm{\widetilde{T}_l}_F+\beta\norm{W_L}_F\norm{\widetilde{T}_l- T_l}_F}{\alpha\norm{W_L}_F\norm{T_l}_F}\\
        &\leq \kappa^2\left[ \frac{\beta-\alpha}{\alpha}\left(1+\frac{\norm{\Delta W_L}_F}{\norm{W_L}_F}\right)\frac{\norm{\widetilde{T}_l}_F}{\norm{T_l}_F}+\frac{\beta}{\alpha}\frac{\norm{\Delta W_L}_F}{\norm{W_L}_F}\frac{\norm{\widetilde{T}_l}_F}{\norm{T_l}_F}+\frac{\beta}{\alpha}\frac{\norm{\widetilde{T}_l - T_l}_F}{\norm{T_l}_F}\right].
    \end{align*}
    
    The last term may be bounded using the inductive hypothesis, but we must still bound $\norm{\widetilde{T}_l}_F/\norm{T_l}_F$. To economise on notation, let us construct the argument for $J_l$ rather than $T_l$:
    \begin{gather*}
    \frac{\norm{\widetilde{J}_l}_F}{\norm{J_l}_F}
    =
    \frac{\norm{\widetilde{\Phi}^\prime_L \widetilde{W}_L \widetilde{T}_l}_F}{\norm{\Phi^\prime_L W_L T_l}_F}
    \leq 
    \frac{\beta}{\alpha}\frac{\norm{\widetilde{W}_L \widetilde{T}_l}_F}{\norm{W_L T_l}_F}
    \leq 
    \frac{\beta}{\alpha}\kappa^2\frac{\norm{\widetilde{W}_L}_F}{\norm{W_L}_F}\frac{\norm{ \widetilde{T}_l}_F}{\norm{T_l}_F}
    \leq 
    \frac{\beta}{\alpha}\kappa^2\left(1+\frac{\norm{\Delta W_L}_F}{\norm{W_L}_F}\right)\frac{\norm{ \widetilde{T}_l}_F}{\norm{T_l}_F}.
    \end{gather*}
    By a simple induction, we then obtain:
    \begin{align*}
    \frac{\norm{\widetilde{J}_l}_F}{\norm{J_l}_F}
    \leq \prod_{k=l+1}^L \frac{\beta}{\alpha}\kappa^2\left(1+\frac{\norm{\Delta W_k}_F}{\norm{W_k}_F}\right) 
    \quad \implies \quad \frac{\norm{\widetilde{T}_l}_F}{\norm{T_l}_F}
    \leq \prod_{k=l+1}^{L-1} \frac{\beta}{\alpha}\kappa^2\left(1+\frac{\norm{\Delta W_k}_F}{\norm{W_k}_F}\right).
    \end{align*}
    Since $\beta > \alpha$, we are free to relax the latter bound to:
    \begin{align*}
    \frac{\norm{\widetilde{T}_l}_F}{\norm{T_l}_F}
    \leq \left(\frac{\beta}{\alpha}\kappa^2\right)^{L-l-1}\prod_{k=l+1}^{L-1} \frac{\beta}{\alpha}\left(1+\frac{\norm{\Delta W_k}_F}{\norm{W_k}_F}\right).
    \end{align*}
    
    Similarly we are free to insert one additional factor of $\beta/\alpha$ into the first term of the bound on $\norm{\widetilde{J}_l - J_l}_F/\norm{J_l}_F$, to obtain:
    \begin{align*}
    \frac{\norm{\widetilde{J}_l - J_l}_F}{\norm{J_l}_F} &\leq \frac{\beta}{\alpha}\kappa^2\left[ \frac{\beta-\alpha}{\alpha}\left(1+\frac{\norm{\Delta W_L}_F}{\norm{W_L}_F}\right)\frac{\norm{\widetilde{T}_l}_F}{\norm{T_l}_F}+\frac{\norm{\Delta W_L}_F}{\norm{W_L}_F}\frac{\norm{\widetilde{T}_l}_F}{\norm{T_l}_F}+\frac{\norm{\widetilde{T}_l - T_l}_F}{\norm{T_l}_F}\right]
    \end{align*}
    We now substitute in the inductive hypothesis and the bound on $\norm{\widetilde{T}_l}_F/\norm{T_l}_F$ to obtain:
    \begin{align*}
    &\frac{\norm{\widetilde{J}_l - J_l}_F}{\norm{J_l}_F} \\ &\leq 
    \left(\frac{\beta}{\alpha}\kappa^2\right)^{L-l}\left[\left[ \frac{\beta-\alpha}{\alpha}\left(1+\frac{\norm{\Delta W_L}_F}{\norm{W_L}_F}\right)+\frac{\norm{\Delta W_L}_F}{\norm{W_L}_F}+1\right]\prod_{k=l+1}^{L-1} \frac{\beta}{\alpha}\left(1+\frac{\norm{\Delta W_k}_F}{\norm{W_k}_F}\right)-1\right] \\
    &=
    \left(\frac{\beta}{\alpha}\kappa^2\right)^{L-l}\left[\left(1+\frac{\beta-\alpha}{\alpha}\right)\left(1+\frac{\norm{\Delta W_L}_F}{\norm{W_L}_F}\right)\prod_{k=l+1}^{L-1} \frac{\beta}{\alpha}\left(1+\frac{\norm{\Delta W_k}_F}{\norm{W_k}_F}\right)-1\right]\\
    &=
    \left(\frac{\beta}{\alpha}\kappa^2\right)^{L-l}\left[\prod_{k=l+1}^{L} \frac{\beta}{\alpha}\left(1+\frac{\norm{\Delta W_k}_F}{\norm{W_k}_F}\right)-1\right],
    \end{align*}
    which is what needed to be shown.
\end{proof}

\newpage

\section{Experimental details}
\label{app:exp}

\begin{table}
    \caption{Test set results. We quote loss for the classifiers, FID \citep{ttur} for the GAN, and perplexity for the transformer---so lower is better. Training set results are given in Table \ref{tab:train}.}
    \label{tab:test}
    \centering
    \begin{tabular}{ccccccc}
        \toprule
        \textbf{Benchmark} & 
        \begin{tabular}{@{}c@{}}\textbf{SGD}\\$\boldsymbol{\eta}$\end{tabular} &
        \begin{tabular}{@{}c@{}}\textbf{Fromage}\\$\boldsymbol{\eta}$\end{tabular} &
        \begin{tabular}{@{}c@{}}\textbf{Adam}\\$\boldsymbol{\eta}$\end{tabular} &
        \textbf{SGD} &    
        \textbf{Fromage} & 
        \textbf{Adam} \\
        \midrule
        CIFAR-10 & 0.1 & 0.01 & 0.001 & $0.545\pm0.002$ & $\pmb{0.31\pm0.02}$ & $0.76\pm0.02$\\
        ImageNet & 1.0 & 0.01 & 0.001 & $\pmb{1.091\pm0.006}$  & $1.126\pm 0.002$ & $1.184\pm0.009$\\
        GAN      & 0.01 & 0.01 & 0.0001 & $34\pm2$ & $\pmb{16\pm1}$ & $23.9\pm0.9$\\
        Transformer & 1.0 & 0.01 & 0.0001 & $\pmb{169.6\pm0.6}$ & $171.1\pm0.3$ & $172.7\pm0.3$\\
        \bottomrule
    \end{tabular}
\end{table}

We provide the code used to run the experiments at \url{https://github.com/jxbz/fromage}. All experiments were run on a single NVIDIA Titan RTX GPU, except the Image{N}et experiment which was distributed across 8 NVIDIA V100 GPUs.

We will now summarise the key details of the experimental setup.

\subsection*{Figure \ref{fig:curvature}: measuring the loss curvature}

We train multilayer perceptrons of depth $2$ and $16$ with $\mathrm{relu}$ nonlinearity on the MNIST dataset \citep{mnist}. Each $28\,\mathrm{px}\times28\,
\mathrm{px}$ image is flattened to a $784$ dimensional vector. All weight matrices of the multilayer perceptron are of dimension $784\times784$, except the final output layer which is of dimension $784\times10$. To train the network, we minimise the softmax cross-entropy loss function on the network output. We use the \fromage optimiser with an initial learning rate of $0.01$ and reduce the learning rate by a factor of 0.9 every epoch. A training minibatch size of 250 datapoints is used. We plot the training accuracy over the 10 epochs of training, and smooth these training accuracy curves over a window length of 5 iterations to improve their legibility.

During the 10 epochs of training we record 10 snapshots of the model weights. For the 2 layer network, we record snapshots more frequently during the first epoch since this is when most of the learning happens. The 16 layer network trains slower so we record snapshots once per epoch.

For each saved snapshot of the depth $L\in
\{2,16\}$ network, we now investigate properties of the loss surface and gradient for perturbations to that snapshot. Specifically, for every layer in the network we perturb the weights $W_l$ along the full batch gradient direction $g_l$. That is, for $\eta \in[0,0.1]$ we record the loss $\mathcal{L}(\widetilde{W})$ and full batch gradient $\widetilde{g}_l$ for perturbed networks with parameters given by:
$$\widetilde{W}_l = W_l - \eta\cdot  g_l \cdot \frac{\norm{W_l}_F}{\norm{g_l}_F} \qquad (l=1,...,L).$$
We plot the loss $\mathcal{L}(\widetilde{W})$ and relative change in gradient $\frac{\norm{\widetilde{g}_1 - g_1}_F}{\norm{g_1}_F}$ for the first network layer as a function of $\eta \in[0,0.1]$.

\subsection*{Figure \ref{fig:fromage} (right): stability of weight norms}

With the same experimental setup as for class-conditional GAN training (see below), we run a lesion experiment on Fromage where we disable the $1/\sqrt{1+\eta^2}$ prefactor. This makes Fromage equivalent to the LARS algorithm \citep{You:EECS-2017-156}. We plot the norms of all spectrally normalised layers in both the generator and discriminator during 100 epochs of training.

\subsection*{Figure \ref{fig:depth} (left): training multilayer perceptrons at large depth}

With the same basic training setup as for Figure \ref{fig:curvature}, this time we vary the depth of the multilayer perceptron and benchmark \sgd, \adam and \fromage. The main difference to the Figure \ref{fig:curvature} setup is that this time we train for 100 epochs (to allow more time for learning to converge) and we decay the learning rate by factor 0.95 every epoch, so that the learning rate has reduced by 2 orders of magnitude after roughly 90 epochs of training. For each learning algorithm we run three initial learning rates at each depth: for \sgd we try $\eta \in \{10^0, 10^{-1}, 10^{-2}\}$, for \fromage we try $\eta \in \{10^{-1}, 10^{-2}, 10^{-3}\}$ and for \adam we try $\eta \in \{10^{-2}, 10^{-3}, 10^{-4}\}$. These values were found to be well-suited to each algorithm in preliminary experiments. For \adam we set its $\beta_1$ and $\beta_2$ hyperparameters to the standard values of $0.9$ and $0.999$ suggested by \citet{kingma_adam:_2015}. For \sgd we set the momentum value to $0.9$, and a preliminary test suggested that this improved its performance versus switching off momentum.

\subsection*{Figure \ref{fig:depth} (right): learning rate tuning}

For each benchmark (full details below) we conduct a learning rate grid search. For each learning rate in $\{10^{-4},10^{-3},10^{-2},10^{-1},10^{0}\}$ we plot the error after a fixed number of epochs. No learning rate decay schedule is used here. In the CIFAR-10 classification experiment, we record training loss at epoch 50. In the GAN experiment, we record FID between the training set and generated distribution at epoch 100. In the transformer experiment, we record training perplexity at epoch 10.

\subsection*{Class-conditional generative adversarial network training}

We train a class-conditional generative adversarial network with projection discriminator \citep{miyato2018spectral,miyato2018cgans} on the CIFAR-10 dataset \citep{Krizhevsky2009LearningML}. Whilst our architecture is custom, it attempts to replicate the network design of \citet{brock2018large}. We use the hinge loss for training, following \citet{miyato2018cgans}. We train for 120 epochs at batch size $256$, and divide the learning rate by 10 at epoch 100. We make one discriminator (D) step per generator (G) step. We use equal learning rates in G and D. For all algorithms we tune the initial learning rate on a logarithmic scale (over powers of 10).

To report accuracy, we use the FID score \citep{ttur}. In essence, this score measures the distance between two sets of images by measuring the difference in the first and second moments of their representations at the penultimate layer of an \texttt{inception\_v3} \citep{Szegedy2015RethinkingTI} classification network. It is intended to measure a notion of \emph{semantic distance} between two sets of images. We report the FID score between the generated distribution and both the train and test set of CIFAR-10 to provide some indication of how well the learning generalises. We do not use post-processing techniques that have been found to improve FID scores such as the \emph{truncation trick} \citep{brock2018large} which adjusts the input distribution to the generator at test time with a tunable hyperparameter, or by reporting FID scores on an exponential moving average of the generator \citep{karras2018progressive} which also introduces an extra tunable hyperparameter.

\subsection*{Image{N}et classification}

We train the \texttt{resnet50} network~\citep{kaiming} on the Image{N}et 2012 ILSVRC dataset~\citep{ILSVRC15} distributed over 8 V100 GPUs. We use a batch size of 128 images per GPU, meaning that the total batch size is 1024. The network is trained for a total of 90 epochs, with the learning rate decayed by a factor of 10 after every 30 epochs. A standard data augmentation scheme is used. The initial learning rate is tuned over the set $\{10^{-3}, 10^{-2}, 10^{-1}, 10^{0}, 10^{1}\}$ based on the best top-1 accuracy on the validation subset. The final results are reported on the test subset for three runs with different random initialization seeds. For the \adam optimiser, the $\beta_1$ and $\beta_2$ parameters are set to their default values of $0.9$ and $0.999$ as recommended by \citet{kingma_adam:_2015}. For \sgd, the weight decay coefficient is set to $10^{-4}$ as recommended by \citet{kaiming}.

\subsection*{Wikitext-2 transformer}

We train a small transformer network \citep{attention} on the Wikitext-2 dataset \citep{wikitext}. The code is borrowed from the Pytorch examples repository at \href{https://github.com/pytorch/examples}{this https url}. The network is trained for 20 epochs, with the learning rate decayed by 10 at epoch 10. Perplexity is recorded on both the training and test sets. We found that without regularisation, Fromage would heavily overfit the training set. We were able to correct this behaviour by bounding each layer's parameter norm to be smaller than its initial value.

\subsection*{CIFAR-10 classification}

We train a \texttt{resnet18} network \citep{kaiming} on the CIFAR-10 dataset \citep{Krizhevsky2009LearningML}. We train for 350 epochs and divide the learning rate by 10 at epochs 150 and 250. For data augmentation, a standard scheme is used involving random crops and horizontal flips. We report training and test loss. Again, we found that without regularisation, Fromage would heavily overfit the training set. And again, we were able to correct this behaviour by bounding each layer's parameter norm to be smaller than its initial value.

\end{document}